\documentclass[runningheads,orivec]{llncs}

\usepackage[T1]{fontenc}
\usepackage[utf8]{inputenc}
\usepackage{graphicx}
\usepackage{amsmath,amssymb,mathtools}
\usepackage{booktabs}
\usepackage{tabularx}
\usepackage{array}
\usepackage{multirow}
\usepackage{algorithm}
\usepackage{algpseudocode}
\usepackage{xcolor}
\usepackage{xurl}
\usepackage{xspace}
\usepackage{enumitem}
\usepackage{placeins}
\usepackage[hidelinks]{hyperref}
\newcolumntype{Y}{>{\raggedright\arraybackslash}X}
\newcolumntype{L}[1]{>{\raggedright\arraybackslash}p{#1}}
\DeclareMathOperator*{\argmax}{arg\,max}
\DeclareMathOperator{\clip}{clip}

\newcommand{\MERA}{\textsc{PYTHALAB}\discretionary{-}{}{-}\textsc{MERA}\xspace}
\newcommand{\GRACE}{\textsc{GRACE}\xspace}
\newcommand{\REFINE}{\ensuremath{\mathrm{REFINE}_{\mathrm{B1}}}\xspace}

\begin{document}

\title{PYTHALAB-MERA: Validation-Grounded Memory, Retrieval, and Acceptance Control for Frozen-LLM Coding Agents}
\titlerunning{PYTHALAB-MERA: Memory and Acceptance Control}

\author{Mehmet Iscan}
\authorrunning{M. Iscan}
\institute{PythaLab, Yildiz Technical University\\\email{miscan@yildiz.edu.tr}}

\maketitle

\begin{abstract}
Local LLM-based coding agents increasingly work in settings where correctness is earned through execution feedback, persistent state, and bounded repair, not through a single fluent answer. Static retrieval, long-context prompting, self-refinement, execution-feedback repair, and reinforcement learning over model weights each address part of this setting, but they do not jointly provide validation-grounded episodic memory, adaptive retrieval-action selection, delayed credit assignment, and structural skill reuse around a frozen local model. We introduce \MERA, a lightweight external controller for local validation-conditioned code generation. The frozen language model proposes complete source files; the controller decides which memory records and AST-derived skills should enter the next prompt, validates each candidate through a fail-fast pipeline, converts validation outcomes into bounded shaped rewards, and propagates delayed credit through TD(\(\lambda\))-style eligibility traces. We evaluate the implementation as a local CLI artifact on reinforcement-learning coding tasks with strict validation gates. In the measured hard RL setting with three tasks, three repetitions, and a three-attempt budget, \MERA passed 8/9 strict validations; the self-refinement baseline and the investigated \GRACE{} extension each passed 0/9. These results support a deliberately bounded claim: in this recorded setting, the external memory-and-retrieval controller improved validation success. They do not establish general-purpose code synthesis, state-of-the-art performance, formal program correctness, or formal safety.
\end{abstract}

\section{Introduction}

LLM-based coding agents are increasingly used in long-horizon, tool-mediated software workflows in which a plausible single output is not sufficient. A generated program must satisfy interface contracts, import constraints, runtime behavior, and task-specific tests; failures at one step influence the context required at later steps. Recent work on stateful code-agent benchmarks, software-engineering agents, and terminal-native coding systems shows that agent performance depends strongly on maintaining task state, execution evidence, and structured context rather than merely extending the prompt with more text \cite{i1_1,d3_1,i6_2,i6_6}. In this setting, the central problem is not only code generation but also control: the agent must decide which evidence to preserve, which error signals to trust, which prior attempts to recall, and when a candidate has reached an externally defined acceptance condition.

The limitations of single-shot generation are now well documented. Iterative self-repair can improve pass rates over one-pass generation, but unguided self-correction remains vulnerable to model-internal hallucination and weak feedback grounding \cite{i2_1,e1_3}. Execution-feedback methods provide a stronger signal by returning compiler diagnostics, runtime failures, unit-test outcomes, or behavioral feedback to the model \cite{i3_1,d2_3}. However, many execution-feedback systems still use static prompt rules or manually designed repair loops. Conversely, reinforcement-learning approaches for code generation often update the generator itself using execution-derived rewards \cite{e1_4,m4_3}, which can be computationally expensive, unstable for small local traces, and difficult to deploy in privacy-preserving local settings. This creates a gap for architectures that use validation feedback adaptively while keeping the language model frozen.

Memory and retrieval add a second dimension to the problem. Retrieval-augmented program repair, AST-guided repair, repository-level RAG, and experiential memory systems show that prior code, failure traces, and structural context can improve repair or maintenance workflows \cite{i5_1,i5_4,i5_6,d1_1,d1_3}. Yet passive nearest-neighbor retrieval is not enough when usefulness depends on the current failure mode, attempt index, validation progress, and downstream outcome. Agentic RAG has therefore been framed as a sequential decision process in which retrieval is a state-dependent action rather than a preprocessing step \cite{d1_2}. \MERA follows this interpretation, but specializes it to local code repair: retrieval actions select failure-matched examples, AST-structural matches, skills, or no context, and these actions are updated using validation-derived rewards.

The same persistence that enables memory reuse also introduces risks. Memory poisoning, stale records, prompt injection through recalled evidence, and leakage from private retrieval stores are documented concerns for long-lived LLM agents \cite{i4_1,i4_2,i4_4,d5_3}. Therefore, a memory-augmented coding agent should not treat recalled snippets as authoritative. In \MERA, recalled episodes and extracted skills are inserted only as untrusted prompt evidence; correctness is externalized through a fail-fast validation pipeline. This does not provide formal safety, but it prevents memory similarity or model confidence from serving as the acceptance criterion.

This paper presents a local validation-conditioned coding-agent architecture, \MERA, centered on a lightweight external controller for a frozen LLM. The method is built around four principles. First, generated code is accepted only if it reaches a validator-defined zero-cost terminal set. Second, memory retrieval is a contextual action selected by a LinUCB controller, not a fixed nearest-neighbor lookup. Third, validation outcomes are converted into bounded shaped rewards and propagated backward through the repair trajectory using TD(\(\lambda\))-style delayed credit. Fourth, successful programs are mined into an AST-derived skill library, but skills remain untrusted until a future generated candidate passes validation.

The contributions are as follows. (i) We formalize local code repair with a frozen LLM as a finite-horizon validation-conditioned control problem over prompt evidence, memory retrieval, reward, and acceptance. (ii) We implement a controller that combines deterministic episodic fingerprints, LinUCB retrieval-action selection, validation-shaped rewards, eligibility-weighted delayed credit, and AST-derived skill reuse. (iii) We state a narrow zero-cost acceptance property that ties acceptance to the implemented validator-defined terminal set. (iv) We report local benchmark evidence against a self-refinement baseline and an investigated \GRACE{} extension. (v) We keep the empirical claim deliberately bounded, separating the supported local result from unsupported claims about general-purpose synthesis, formal correctness, formal safety, or state-of-the-art performance.

We use the project-qualified name \MERA rather than the generic MERA shorthand to make the proposed artifact easier to distinguish from unrelated uses of the acronym. The name emphasizes the actual contribution: a PythaLab implementation of memory-enhanced retrieval and validator-defined acceptance control around a frozen local generator.

We use \GRACE to denote an investigated additive extension, the Gap-aware Repair And Consolidation Engine. In this implementation, \GRACE adds intent--execution-gap repair guidance, AST-diff repair suggestions, and score-thresholded transition consolidation on top of the \MERA controller. It is not the proposed final method; it is retained as a negative-result comparison arm.

\section{Method}

\subsection{Methodological framing}

\MERA is a reinforcement-learning-inspired side controller for local validation-conditioned code generation. The frozen language model is denoted by \(G_\omega\), where \(\omega\) are fixed parameters. Unlike model-tuning approaches for execution-grounded code generation \cite{m4_3,e1_4,d2_1}, \MERA does not update the generator:
\begin{equation}
\omega_{t+1}=\omega_t=\omega .
\label{eq:frozen}
\end{equation}
The adaptive object is an external controller state,
\begin{equation}
\Theta_t = \left(B_t^{\mathrm{ret}},M_t,L_t,\mathcal{T}_t,B_t^{\mathrm{dec}}\right),
\label{eq:theta}
\end{equation}
where \(B_t^{\mathrm{ret}}\) is the LinUCB retrieval-action controller, \(M_t\) is episodic memory, \(L_t\) is the AST-derived skill library, \(\mathcal{T}_t\) is the delayed-credit trace state, and \(B_t^{\mathrm{dec}}\) is an optional decoding-profile bandit. The main learning problem is therefore not token-level language modeling but controller-level adaptation over prompt conditioning, memory, and validation feedback. This separation is aligned with agent architectures that externalize memory and control rather than embedding all adaptation into the LLM \cite{m1_1,m6_1,m11_1}.

\begin{figure}[!htbp]
\centering
\includegraphics[width=\textwidth]{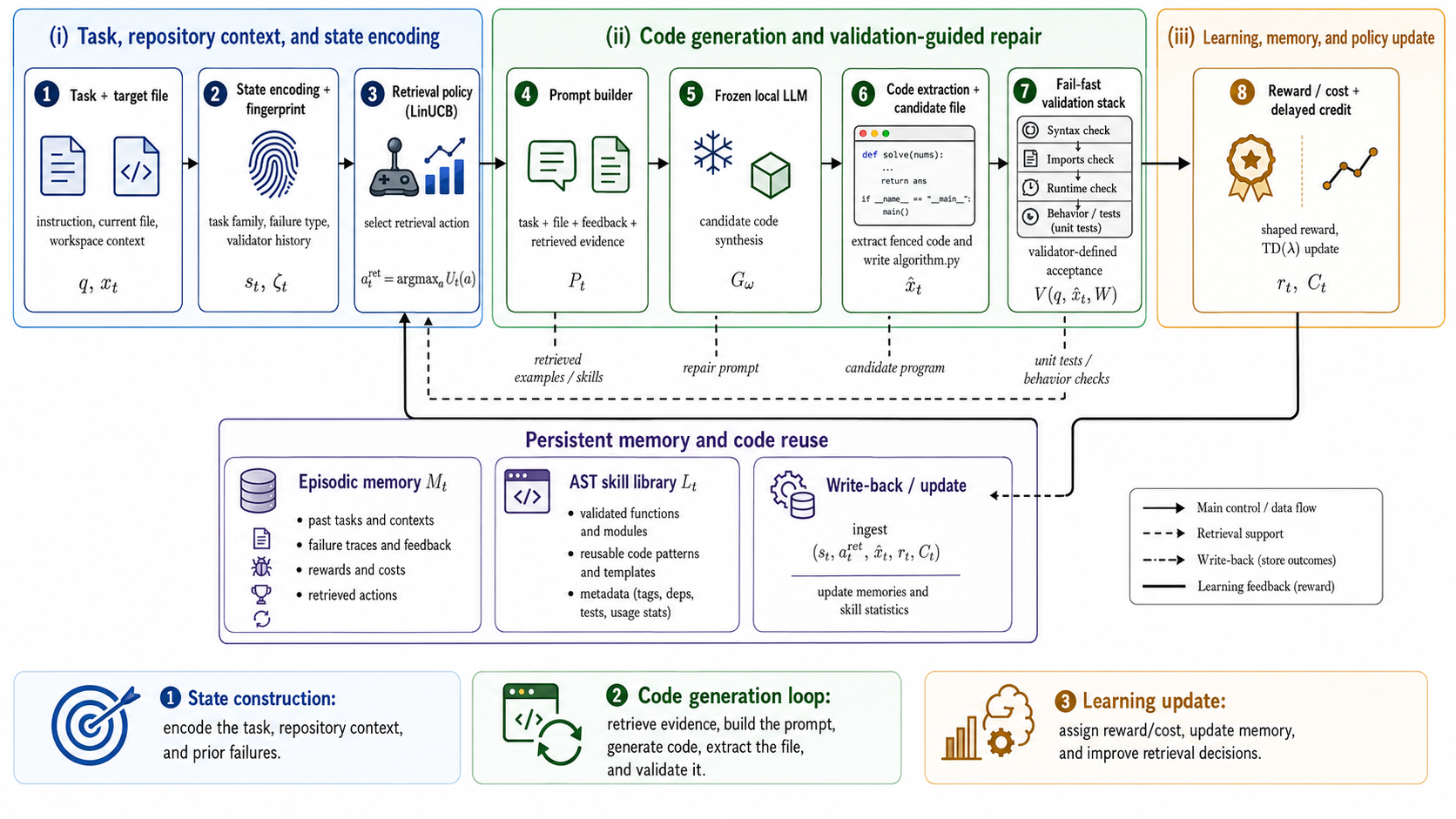}
\caption{\MERA validation-grounded control loop. The frozen generator proposes a candidate program, while the external controller selects retrieval evidence, validates the candidate, and converts validation feedback into memory and credit updates.}
\label{fig:architecture}
\end{figure}

Figure~\ref{fig:architecture} separates generation from control. The task description and current target file are first compressed into a state fingerprint. A LinUCB retrieval controller then chooses the prompt-evidence action: no retrieval, failure-matched episodes, AST-structural matches, reusable skills, or a small mixture of these sources. Retrieved episodes and skills are treated as untrusted context, not as correctness evidence. The frozen LLM receives the composed prompt and emits a candidate implementation. Acceptance is decided only by the fail-fast validation pipeline. The resulting success, partial progress, failure type, and runtime cost are converted into bounded reward/cost signals; these signals update the retrieval controller, episodic memory, AST-skill store, and TD($\lambda$)-style delayed-credit trace. Thus, \MERA learns how to condition future prompts without changing the LLM weights.

\subsection{Task, state, and action}

A task instance consists of a natural-language request \(q\), a local workspace \(W\), and a target implementation file \(x_t\). In the implemented artifact, the target file is a single Python file named \texttt{algorithm.py}. The agent has an attempt budget \(T\). At attempt \(t\), the controller observes
\begin{equation}
 s_t = \left(q,x_t,H_t,F_{t-1},M_t,L_t,c_t\right),
\label{eq:state}
\end{equation}
where \(H_t\) is retained interaction history, \(F_{t-1}\) is the previous validation report, \(M_t\) is episodic memory, \(L_t\) is the skill library, and \(c_t\) is a compact context key encoding task family, model identity, edit mode, previous failure type, previous validation progress, and a coarse duration bucket.

The action is decomposed into optional decoding control, retrieval control, and generated code:
\begin{equation}
 a_t=\left(a_t^{\mathrm{dec}},a_t^{\mathrm{ret}},a_t^{\mathrm{gen}}\right).
\label{eq:action}
\end{equation}
The core \MERA decision is \(a_t^{\mathrm{ret}}\), which determines whether to condition the next prompt on failure-matched memories, AST-structural matches, skills, a mixture of these, or no retrieved context. The prompt composer is deterministic:
\begin{equation}
 P_t = \Gamma(q,x_t,H_t,F_{t-1},R_t,S_t),
\label{eq:prompt}
\end{equation}
where \(R_t\) denotes selected episodic records and \(S_t\) selected skills. The frozen model then produces text and the controller extracts a Python candidate:
\begin{equation}
 y_t = G_\omega(P_t), \qquad \hat{x}_t=\mathcal{E}(y_t),
\label{eq:generate_extract}
\end{equation}
where \(\mathcal{E}\) is a fenced-code extraction operator. If extraction succeeds, the evaluated direct loop materializes \(\hat{x}_t\) as the target file before validation:
\begin{equation}
 x_{t+1}=\hat{x}_t.
\label{eq:materialize}
\end{equation}
Thus, \MERA is validation-gated for acceptance, but the evaluated loop is not a pre-materialization staging system. This distinction matters because acceptance safety and write-time safety are different claims.

As a design objective for organizing the controller signals, the finite-horizon controller can be written as a discounted cost minimization problem,
\begin{equation}
 \min_{\pi_\Theta}\;J(\pi_\Theta)=\mathbb{E}_{\pi_\Theta,G_\omega}\left[\sum_{t=0}^{T-1}\gamma^t C_t\right], \qquad 0\leq\gamma\leq 1,
\label{eq:objective_cost}
\end{equation}
where \(C_t\) denotes a validation-defined cost. The implementation does not require an exact dynamic-programming solution of this objective. Instead, the online learners receive a bounded shaped reward signal:
\begin{equation}
 \max_{\pi_\Theta}\;\mathbb{E}_{\pi_\Theta,G_\omega}\left[\sum_{t=0}^{T-1}\gamma^t r_t\right].
\label{eq:objective_reward}
\end{equation}
The reward view in Eq.~\eqref{eq:objective_reward} is equivalent to the cost view only when \(r_t\) is an affine decreasing transform of \(C_t\). In the implemented system, \(r_t\) is a pragmatic validation-shaped learning signal, defined below, while \(C_t^{\mathrm{acc}}\) is used for the acceptance property. The implemented learner therefore applies online bandit and delayed-credit updates rather than solving the finite-horizon objective globally.

The transition dynamics are induced by the black-box generator, workspace, and validator. \MERA does not learn a transition model over programs and does not claim optimal control over the full program-generation process.

\subsection{Preprocessing and data preparation}

The data used by the controller are produced online from each repair attempt. There is no offline supervised training set and no model-weight update. Instead, each attempt is converted into a compact control record through four preprocessing steps. First, the model response is parsed by the extractor $\mathcal{E}$; non-extractable responses are converted into an extraction-failure report rather than silently discarded. Second, the target file or candidate code is parsed into AST-derived structural features. Third, the validator report is normalized into a primary failure class, passed-check count, duration, and behavior-failure flag. Fourth, the task and code are converted into the fingerprint $\zeta_t$ used for retrieval and memory persistence.

This preprocessing stage is important because it separates raw text from control evidence. The controller never learns from an unconstrained natural-language transcript alone. It learns from normalized objects: task-family labels, token 3-grams, AST features, failure signatures, shaped rewards, and selected actions. These records form the data stream for episodic memory, LinUCB updates, skill extraction, and delayed-credit propagation.

\subsection{Validation and zero-cost acceptance}

The validator maps a generated candidate to a structured report:
\begin{equation}
 F_t=V(q,\hat{x}_t,W)=\left(z_{t,1},z_{t,2},\ldots,z_{t,m_t}\right).
\label{eq:validation_report}
\end{equation}
Here \(m_t\) ranges over checks that were executed or explicitly skipped before the fail-fast pipeline terminated.
The implemented validation order is fail-fast:
\begin{equation}
\text{syntax}\rightarrow\text{undefined-name}\rightarrow\text{spec-contract}\rightarrow\text{import}\rightarrow\text{runtime}\rightarrow\text{behavior}.
\label{eq:validation_order}
\end{equation}
Behavior checks are used when the task exposes executable behavioral obligations. Define
\begin{equation}
 u_{t,k}=\begin{cases}
1, & z_{t,k}\text{ passed or was skipped},\\
0, & z_{t,k}\text{ failed}.
\end{cases}
\label{eq:stage_indicator}
\end{equation}
The primary validator-pass indicator is
\begin{equation}
 V_t=\mathbf{1}\left[\prod_{k=1}^{m_t}u_{t,k}=1\ \land\ \operatorname{primary\_failure}(F_t)=\mathrm{UNKNOWN}\right].
\label{eq:validator_pass}
\end{equation}
When an optional post-validation semantic judge is enabled, it can veto a validator-passing candidate but cannot create success. Let \(J_t\in\{0,1\}\) indicate absence of a high-confidence judge veto: \(J_t=0\) only for a high-confidence judge failure, and \(J_t=1\) when the judge is disabled, skipped, passing, uncertain, or low-confidence. The final acceptance indicator is
\begin{equation}
 A_t=V_tJ_t.
\label{eq:acceptance}
\end{equation}
The terminal acceptance cost is
\begin{equation}
 C_t^{\mathrm{acc}}=1-A_t,
\label{eq:acc_cost}
\end{equation}
and the zero-cost terminal set is
\begin{equation}
 \mathcal{X}_0(q,W)=\{\hat{x}: C^{\mathrm{acc}}(\hat{x};q,W)=0\}.
\label{eq:zero_set}
\end{equation}

\begin{proposition}[Zero-cost acceptance equivalence]
For every attempt \(t\), \(C_t^{\mathrm{acc}}=0\) if and only if \(A_t=1\).
\end{proposition}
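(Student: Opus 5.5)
The plan is a direct unfolding of the definitions in Eqs.~\eqref{eq:validator_pass}--\eqref{eq:acc_cost}, with one preliminary step: show that \(A_t\) is genuinely \(\{0,1\}\)-valued. The equivalence is immediate once that is in place. By Eq.~\eqref{eq:validator_pass}, \(V_t\) is an indicator of a conjunction, so \(V_t\in\{0,1\}\). By the definition of the judge signal, \(J_t\in\{0,1\}\), and it is set to \(1\) whenever the judge is disabled, skipped, passing, uncertain, or low-confidence. Hence \(A_t=V_tJ_t\in\{0,1\}\), since \(\{0,1\}\) is closed under multiplication.

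The core step is the arithmetic equivalence. By Eq.~\eqref{eq:acc_cost},
\[
C_t^{\mathrm{acc}}=0 \iff 1-A_t=0 \iff A_t=1,
\]
which holds for any real \(A_t\). So the binary-valuedness is not strictly needed for the equivalence itself. It is needed only to make \(C_t^{\mathrm{acc}}\) a genuine \(\{0,1\}\) cost, so that ``not accepted'' corresponds to cost exactly \(1\). I will still state it, so that the corollary reading is clean:
\[
A_t=1 \iff V_t=1 \text{ and } J_t=1.
\]
That is, a candidate is in \(\mathcal{X}_0(q,W)\) exactly when every executed or skipped check has \(u_{t,k}=1\), the primary failure is \(\mathrm{UNKNOWN}\), and there is no high-confidence judge veto.

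The only possible obstacle is well-definedness. \(C_t^{\mathrm{acc}}\) must be a function of the attempt, so I must check that \(F_t\), and hence \(m_t\) and the \(u_{t,k}\), is determined for every attempt. That includes attempts where extraction by \(\mathcal{E}\) fails. The preprocessing subsection handles this case: an extraction failure is converted into a failure report rather than discarded. In that case \(V_t=0\), so \(A_t=0\) and \(C_t^{\mathrm{acc}}=1\), which is consistent with the claim.

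I would close by stressing that the proposition is definitional. It ties acceptance to the validator-defined terminal set and makes no claim of program correctness beyond what \(V\) checks.
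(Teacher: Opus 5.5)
Your proposal is correct and is essentially the paper's argument: substitute $C_t^{\mathrm{acc}}=1-A_t$ from Eq.~\eqref{eq:acc_cost}, then expand $A_t=V_tJ_t$ via Eq.~\eqref{eq:acceptance} into validator pass plus absence of a high-confidence judge veto. Your remark that $1-A_t=0\iff A_t=1$ holds for any real $A_t$ is accurate, so the paper's appeal to $A_t\in\{0,1\}$ matters only for reading $C_t^{\mathrm{acc}}$ as a binary cost; the extraction-failure aside is harmless but not needed for the equivalence.
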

\begin{proof}
By Eq.~\eqref{eq:acc_cost}, \(C_t^{\mathrm{acc}}=1-A_t\). Since \(A_t\in\{0,1\}\), the cost is zero exactly when \(A_t=1\). Eq.~\eqref{eq:acceptance} expands this event into validator acceptance and absence of a judge veto. \(\square\)
\end{proof}

This proposition is a definitional property, not a convergence theorem. It states what it means for the evaluated system to reach zero cost; it does not imply that every task will reach that set.

A progress-sensitive diagnostic stage cost can also be defined over the observed fail-fast report:
\begin{equation}
 C_t^{\mathrm{stage}}=1-\frac{1}{m_t}\sum_{k=1}^{m_t}u_{t,k}.
\label{eq:stage_cost}
\end{equation}
This distinguishes shallow failures such as extraction or syntax errors from deeper failures that pass early validation stages and fail only behavioral checks. Treating skipped checks as passed is used only for this diagnostic progress cost; acceptance remains governed by the validator report and optional judge gate. The implementation uses the non-skipped passed-check count \(p_t\), rather than \(C_t^{\mathrm{stage}}\) directly, as the progress term in the shaped reward below.

\subsection{Validation-shaped reward}

The shaped reward follows the implementation-level design used for online attempt updates and episode summaries. Let \(p_t\) be the number of passed non-skipped checks, \(e_t\) an extraction-failure indicator, \(b_t\) a behavior-failure indicator, \(d_t\) validation duration, and \(D\) a latency normalization horizon. The bounded reward is
\begin{equation}
 r_t=\clip_{[-1,1]}\left(R_{\mathrm{succ}}A_t+(1-A_t)\beta p_t-\eta t-\kappa e_t-\mu b_t-\xi\min\left(1,\frac{d_t}{D}\right)\right).
\label{eq:reward}
\end{equation}
The terms reward terminal success, give partial credit for validation progress, penalize later attempts, penalize extraction and behavioral failures, and optionally penalize latency. Prior code-generation RL methods similarly convert compiler, runtime, test, or structural signals into shaped rewards \cite{m4_3,m4_5,e1_4}; \MERA uses the signal for an external retrieval controller rather than for generator fine-tuning. For Beta/Thompson-style updates that require a value in \([0,1]\), the reward is mapped to a pseudo-success:
\begin{equation}
 \tilde{r}_t=\frac{r_t+1}{2},\qquad \tilde{r}_t\in[0,1].
\label{eq:pseudo_success}
\end{equation}
LinUCB consumes the clipped reward \(r_t\) directly for immediate post-attempt updates. Delayed LinUCB updates consume the clipped TD-style delta \(\delta_j\) with a nonnegative update weight.

\subsection{Episodic memory and fingerprints}

Each attempt is stored as a structured episodic record. The fingerprint is
\begin{equation}
 \zeta_t=\left(f(q),g_3(q),\psi_{\mathrm{AST}}(\hat{x}_t),\sigma(F_t),h(q,\hat{x}_t)\right),
\label{eq:fingerprint}
\end{equation}
where \(f(q)\) is a coarse task-family label, \(g_3(q)\) is a bounded token 3-gram representation, \(\psi_{\mathrm{AST}}\) is an AST feature vector, \(\sigma(F_t)\) is a normalized failure signature, and \(h\) is a complexity bucket. The AST vector contains function count, class count, loop depth, recursion indicators, class usage, common-library indicators, state-machine indicators, and an approximate cyclomatic-complexity feature. AST representations and structural fingerprints are well established in code similarity, clone detection, and fault localization \cite{m7_2,m7_3,m7_4}.

An episodic record is
\begin{equation}
 e_i=\left(\zeta_i,q_i,\hat{x}_i,F_i,r_i,\alpha_i,d_i,a_i^{\mathrm{dec}},a_i^{\mathrm{ret}}\right).
\label{eq:episode_record}
\end{equation}
Here, \(\alpha_i\in\{0,1\}\) records whether attempt \(i\) was accepted. Both successful and failed attempts are retained. This is important because a failed attempt marks part of the boundary around the zero-cost set: an invalid import, a missing interface, a type error, or a runtime crash may guide later retrieval decisions. Episodic memory has been used for cross-task adaptation in LLM agents and debugging systems \cite{m6_1,m6_5}; \MERA specializes the record schema for validation-conditioned coding.

Similarity between a query fingerprint \(\zeta_t\) and stored record \(\zeta_i\) is
\begin{equation}
 S(\zeta_t,\zeta_i)=\frac{w_{\mathrm{tok}}S_{\mathrm{tok}}+w_{\mathrm{ast}}S_{\mathrm{ast}}+w_{\mathrm{fail}}S_{\mathrm{fail}}+w_{\mathrm{fam}}S_{\mathrm{fam}}}{w_{\mathrm{tok}}+w_{\mathrm{ast}}+w_{\mathrm{fail}}+w_{\mathrm{fam}}}.
\label{eq:similarity}
\end{equation}
The configured default weights are \((0.4,0.3,0.2,0.1)\). The AST term is decomposed as
\begin{equation}
 S_{\mathrm{ast}}=0.70S_{\mathrm{struct}}+0.15S_{\mathrm{imports}}+0.15S_{\mathrm{return}}.
\label{eq:ast_similarity}
\end{equation}
This retrieval is deterministic and embedding-free, which supports local reproducibility and avoids dependence on an external vector database.

\subsection{Retrieval-action controller}

\MERA learns which type of evidence should enter the prompt. The retrieval-action set is
\begin{equation}
\begin{split}
\mathcal{A}_{\mathrm{ret}}=\{&\texttt{none},\texttt{1\_failure\_match},\texttt{1\_ast\_match},\texttt{1\_failure\_1\_ast},\\
&\texttt{2\_ast\_match},\texttt{1\_skill\_only},\texttt{1\_failure\_1\_skill},\texttt{diff\_only}\}.
\end{split}
\label{eq:action_set}
\end{equation}
This design follows adaptive information retrieval and contextual-bandit retrieval work in which retrieval utility is conditioned on state and feedback rather than only nearest-neighbor similarity \cite{m3_1,m3_3,m3_4}. At attempt \(t\), the controller computes
\begin{equation}
 \phi_t=\Phi(s_t)\in\mathbb{R}^{d},
\label{eq:features}
\end{equation}
with \(d=16\) in the implementation. The vector includes a bias term, normalized attempt index, previous validation progress, failure-type indicators, and duration buckets.

The update rule follows the standard ridge-regularized LinUCB form for contextual bandits \cite{f1_1}. For each retrieval action \(a\), LinUCB maintains
\begin{equation}
 A_a\in\mathbb{R}^{d\times d}, \qquad b_a\in\mathbb{R}^{d},
\label{eq:linucb_state}
\end{equation}
with ridge initialization
\begin{equation}
 A_a^{(0)}=\lambda I_d,\qquad b_a^{(0)}=0,\qquad \lambda>0.
\label{eq:linucb_init}
\end{equation}
The estimated utility parameter is
\begin{equation}
 \hat{\theta}_a=A_a^{-1}b_a,
\label{eq:theta_hat}
\end{equation}
and the upper-confidence score is
\begin{equation}
 U_t(a)=\hat{\theta}_a^\top\phi_t+\alpha_{\mathrm{ucb}}\sqrt{\phi_t^\top A_a^{-1}\phi_t}.
\label{eq:ucb}
\end{equation}
The controller selects
\begin{equation}
 a_t^{\mathrm{ret}}=\argmax_{a\in\mathcal{A}_{\mathrm{ret}}}U_t(a).
\label{eq:select}
\end{equation}
After validation, if the selected context was actually injected or the selected action was \texttt{none}, the update is
\begin{equation}
 A_{a_t}\leftarrow A_{a_t}+w_t\phi_t\phi_t^\top,
\label{eq:a_update}
\end{equation}
\begin{equation}
 b_{a_t}\leftarrow b_{a_t}+w_t r_t\phi_t.
\label{eq:b_update}
\end{equation}
Direct post-attempt updates use \(w_t=1\); delayed-credit updates use fractional weights. Lightweight contextual bandits are appropriate here because the action set is small, evaluation budgets are bounded, and local traces are too small to justify full online LLM reinforcement learning \cite{m2_1,m2_4,m2_5,m11_1}.

\begin{proposition}[Positive-definite retrieval state]
If \(A_a^{(0)}=\lambda I_d\) with \(\lambda>0\) and all update weights satisfy \(w_t\geq 0\), then \(A_a\) remains positive definite after every update.
\end{proposition}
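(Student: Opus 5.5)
The plan is an induction on the number of updates applied to a fixed action \(a\). The invariant is the quadratic-form inequality \(v^\top A_a v\geq \lambda\|v\|_2^2\) for every \(v\in\mathbb{R}^d\). This is slightly stronger than positive definiteness. It also supplies the uniform lower bound \(\lambda_{\min}(A_a)\geq\lambda\), which guarantees that \(A_a^{-1}\) in Eqs.~\eqref{eq:theta_hat} and~\eqref{eq:ucb} exists and that the square root in Eq.~\eqref{eq:ucb} is well defined.

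\emph{Base case.} By Eq.~\eqref{eq:linucb_init}, \(A_a^{(0)}=\lambda I_d\). Hence \(v^\top A_a^{(0)}v=\lambda\|v\|^2\), which is strictly positive for \(v\neq 0\) because \(\lambda>0\). Symmetry is immediate.

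\emph{Inductive step.} Every modification of \(A_a\) has the form of Eq.~\eqref{eq:a_update}: \(A_a\leftarrow A_a+w\,\phi\phi^\top\) with \(w\geq 0\). This covers both the direct post-attempt updates (\(w_t=1\)) and the delayed-credit updates (fractional \(w\geq0\)). Actions other than \(a_t\) are left unchanged, so they trivially keep the invariant. For the updated action, symmetry is preserved because \(\phi\phi^\top\) is symmetric. For any \(v\), the quadratic form gains the term
\[
v^\top(w\,\phi\phi^\top)v=w\,(\phi^\top v)^2\geq 0,
\]
so it cannot fall below \(\lambda\|v\|^2\). After \(n\) updates this gives the closed form \(A_a^{(n)}=\lambda I_d+\sum_{j}w_j\phi_j\phi_j^\top\): a positive definite matrix plus a sum of positive semidefinite rank-one terms, which is positive definite.

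\emph{Main obstacle.} The linear algebra is routine. The real work is checking that the hypothesis \(w_t\geq 0\) covers every code path that writes to \(A_a\). I would list the update sites explicitly: the immediate update, and the delayed TD-style update, which the text says uses a nonnegative update weight. I would also note that the reward \(r_t\) or delta \(\delta_j\) enters only \(b_a\) through Eq.~\eqref{eq:b_update}. Because negative rewards therefore never touch \(A_a\), the sign of the reward is irrelevant to the claim. Finally, I would remark that the argument is exact arithmetic. Floating-point positive definiteness is not claimed, but the bound \(\lambda_{\min}\geq\lambda\) gives a conditioning margin.
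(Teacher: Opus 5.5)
Your proof is correct and follows the same route as the paper. Both argue by induction over updates, with the quadratic form gaining the nonnegative term \(w(\phi^\top v)^2\) at each step. Your strengthened invariant \(v^\top A_a v\geq\lambda\|v\|_2^2\) and your explicit symmetry check are small additions that the paper leaves implicit, and they give the uniform bound \(\lambda_{\min}(A_a)\geq\lambda\) rather than only invertibility.
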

\begin{proof}
For any nonzero vector \(z\),
\begin{equation}
z^\top(A_a+w_t\phi_t\phi_t^\top)z=z^\top A_a z+w_t(z^\top\phi_t)^2.
\label{eq:pd_proof}
\end{equation}
The first term is positive by induction and the second is nonnegative. Hence the updated matrix remains positive definite and invertible. \(\square\)
\end{proof}

\subsection{Delayed credit propagation}

A retrieval action can influence a later repair attempt even if the immediate attempt fails. \MERA stores the trajectory
\begin{equation}
 \tau=\left[(s_0,a_0,r_0),(s_1,a_1,r_1),\ldots,(s_{T'},a_{T'},r_{T'})\right],\qquad T'\leq T-1.
\label{eq:trajectory}
\end{equation}
For target step \(j\), the TD-style error is
\begin{equation}
 \delta_j=\clip_{[-\Delta,\Delta]}\left(r_j+\gamma V_\nu(s_{j+1})-V_\nu(s_j)\right).
\label{eq:td_error}
\end{equation}
The runtime integration is conservative: no separate learned value estimator is maintained, so \(V_\nu(s_j)=V_\nu(s_{j+1})=0\), reducing the update to clipped reward propagation:
\begin{equation}
 \delta_j=\clip_{[-\Delta,\Delta]}(r_j).
\label{eq:clipped_reward}
\end{equation}
Eligibility evolves as
\begin{equation}
 E_i^{(j)}=\gamma\lambda_{\mathrm{td}}E_i^{(j-1)}+\mathbf{1}[i=j],
\label{eq:eligibility}
\end{equation}
and the update weight is
\begin{equation}
 w_{i,j}=\min(w_{\max},\alpha_{\mathrm{td}}E_i^{(j)}).
\label{eq:td_weight}
\end{equation}
Here \(0\leq\gamma\leq1\), \(0\leq\lambda_{\mathrm{td}}\leq1\), \(\alpha_{\mathrm{td}}\geq0\), \(\Delta\geq0\), and \(w_{\max}\geq0\). The current implementation uses \(w_{\max}=0.5\) as the fixed helper default for delayed-credit dispatch.
The delayed signal delivered to the side learner is
\begin{equation}
 \bar{r}_{i,j}=w_{i,j}\delta_j.
\label{eq:delayed_reward}
\end{equation}
This mechanism is inspired by eligibility traces in temporal-difference learning \cite{f1_2} and delayed reward handling in RL \cite{m5_4,m5_6}, but the runtime instantiation is deliberately weaker: it performs clipped, eligibility-weighted reward propagation without learning a separate value function and is not presented as a convergence proof for program repair.

\begin{proposition}[Bounded delayed-credit magnitude]
For every delayed-credit update, \(|\bar{r}_{i,j}|\leq w_{\max}\Delta\).
\end{proposition}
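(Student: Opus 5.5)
The plan is to bound the two factors of \(\bar{r}_{i,j}=w_{i,j}\delta_j\) from Eq.~\eqref{eq:delayed_reward} separately and then multiply. First, the clipping operator in Eq.~\eqref{eq:td_error} (and its runtime specialization Eq.~\eqref{eq:clipped_reward}) maps any real number into \([-\Delta,\Delta]\). This interval is well defined because \(\Delta\geq0\). Hence \(|\delta_j|\leq\Delta\) regardless of the value of \(r_j\), \(\gamma\), or the (here zero) value estimates \(V_\nu\).

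Second, I would bound the weight in Eq.~\eqref{eq:td_weight} from both sides, \(0\leq w_{i,j}\leq w_{\max}\). The upper bound is immediate from the \(\min\). The lower bound requires that the eligibility \(E_i^{(j)}\) be nonnegative, and I would establish this by induction on \(j\) using Eq.~\eqref{eq:eligibility}. I take the initial trace to be zero, which is the standard initialization and the one I would state explicitly. In the inductive step, \(\gamma\lambda_{\mathrm{td}}\geq0\), the previous trace is nonnegative by hypothesis, and the indicator is nonnegative, so the new trace is nonnegative. Since \(\alpha_{\mathrm{td}}\geq0\), it follows that \(\alpha_{\mathrm{td}}E_i^{(j)}\geq0\). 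Together with \(w_{\max}\geq0\), this gives \(w_{i,j}\geq0\), and therefore \(|w_{i,j}|=w_{i,j}\leq w_{\max}\).

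Finally, \(|\bar{r}_{i,j}|=|w_{i,j}|\,|\delta_j|\leq w_{\max}\Delta\). The only step that is more than bookkeeping is the sign of the weight. Without \(E_i^{(j)}\geq0\), a negative \(\alpha_{\mathrm{td}}E_i^{(j)}\) could pass through the \(\min\) with arbitrarily large magnitude. So the main point to get right is pinning down the trace initialization and the nonnegativity of the parameters, all of which are stated just after Eq.~\eqref{eq:td_weight}. This nonnegativity also yields, as a by-product, the hypothesis \(w\geq0\) needed to apply the positive-definiteness proposition to delayed LinUCB updates.
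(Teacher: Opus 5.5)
Your argument is correct and is essentially the paper's proof: clipping gives \(|\delta_j|\leq\Delta\), the \(\min\) in Eq.~\eqref{eq:td_weight} gives \(w_{i,j}\leq w_{\max}\), and the two bounds are multiplied. Your induction showing \(E_i^{(j)}\geq 0\), and hence \(w_{i,j}\geq 0\), makes explicit the sign step that the paper's proof leaves implicit when it goes from \(w_{i,j}\leq w_{\max}\) to \(|w_{i,j}\delta_j|\leq w_{\max}\Delta\); the zero initialization you assume is the one Algorithm~\ref{alg:td} in fact specifies.
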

\begin{proof}
Eq.~\eqref{eq:td_error} implies \(|\delta_j|\leq \Delta\), and Eq.~\eqref{eq:td_weight} implies \(w_{i,j}\leq w_{\max}\). Therefore \(|\bar{r}_{i,j}|=|w_{i,j}\delta_j|\leq w_{\max}\Delta\). \(\square\)
\end{proof}

\subsection{AST-derived skill library}

When a candidate is accepted, \MERA parses the accepted program and extracts top-level functions and class methods:
\begin{equation}
 \mathcal{U}(\hat{x})=\{u_1,\ldots,u_m\}.
\label{eq:skill_set}
\end{equation}
Each skill is canonicalized by removing docstrings, normalizing local identifiers, recording parameters, and hashing the normalized AST:
\begin{equation}
 h_i=\operatorname{BLAKE2b}\left(\operatorname{dump}(\operatorname{NormalizeAST}(u_i))\right).
\label{eq:skill_hash}
\end{equation}
A skill record is
\begin{equation}
 \ell_i=\left(h_i,B_i,P_i,\mathcal{F}_i,n_i^{\mathrm{offered}},n_i^{\mathrm{succ}},q_i,\tau_i\right),
\label{eq:skill_record}
\end{equation}
where \(B_i\) is canonical body, \(P_i\) parameter metadata, \(\mathcal{F}_i\) task-family set, \(n_i^{\mathrm{offered}}\) offer count, \(n_i^{\mathrm{succ}}\) successful-use count, \(q_i\) quarantine flag, and \(\tau_i\) last-used timestamp. Program-abstraction and skill-library work motivates reusing validated subroutines to amortize search \cite{m8_1,m8_4,m8_6}. \MERA uses skills only as prompt evidence; it does not automatically splice them into the final program.

\subsection{Investigated GRACE extension}

\GRACE is the investigated extension rather than the proposed method. It is useful to state its mechanism because the empirical comparison includes it. Formally, \GRACE keeps the \MERA controller state and adds an untrusted repair-guidance store,
\begin{equation}
 \Theta_t^{\mathrm{GRACE}}=\left(\Theta_t,\mathcal{O}_t,\mathcal{G}_t\right),
\label{eq:grace_state}
\end{equation}
where \(\mathcal{O}_t\) is a persistent set of AST-diff repair operators and \(\mathcal{G}_t\) is a finite set of ephemeral intent--execution-gap hints derived from the previous attempt's reasoning trace and emitted code. Either set may be empty. Neither component has authority to accept code; both are prompt-side hints that remain subordinate to the validator.

The consolidation gate decides whether a transition between two consecutive attempts is informative enough to store. Let \(v_t\) be the validator total score, \(p_t\) the number of passed non-skipped checks, and \(A_t\) the acceptance indicator. With thresholds \(\Delta_v\geq0\) and \(\Delta_p\geq0\), the gate is
\begin{equation}
\begin{aligned}
 K_t=\mathbf{1}\bigl[&
 A_t=1\ \lor\
 (p_t-p_{t-1}\geq \Delta_p \land v_t\geq v_{t-1})\\
 &{}\lor\
 (v_t-v_{t-1}\geq \Delta_v \land p_t\geq p_{t-1})
 \bigr].
\end{aligned}
\label{eq:grace_gate}
\end{equation}
If \(K_t=1\), \GRACE derives a structural repair operator from the before--after AST difference:
\begin{equation}
 o_t=\mathcal{D}_{\mathrm{AST}}\left(\hat{x}_{t-1},\hat{x}_t,\operatorname{primary\_failure}(F_{t-1}),\operatorname{primary\_failure}(F_t)\right).
\label{eq:grace_operator}
\end{equation}
Operators are later eligible for prompt injection only when their source failure matches the current failure and either their causal offer history is strong enough or a cold-start structural bootstrap condition is met. In simplified form, with \(N_{\mathrm{off}}(o)\) offers and \(N_{\mathrm{succ}}^{\mathrm{off}}(o)\) successful offered outcomes,
\begin{equation}
 \operatorname{eligible}(o)=
 \begin{cases}
  \mathbf{1}\!\left[N_{\mathrm{succ}}^{\mathrm{off}}(o)/N_{\mathrm{off}}(o)\geq \rho\right], & N_{\mathrm{off}}(o)>0,\\
  \mathbf{1}\!\left[\operatorname{bootstrap}(o)=1\right], & N_{\mathrm{off}}(o)=0,
 \end{cases}
\label{eq:grace_eligibility}
\end{equation}
where \(\rho\) is the minimum causal offer-success threshold. The next prompt can receive
\begin{equation}
 G_t^{\mathrm{GRACE}}=\operatorname{TopK}\{o\in\mathcal{O}_t:\operatorname{eligible}(o),\ f_{\mathrm{from}}(o)=\operatorname{primary\_failure}(F_{t-1})\}\cup \mathcal{G}_t .
\label{eq:grace_guidance}
\end{equation}
This formalization explains why \GRACE can be evaluated as an additive extension: it changes prompt evidence and memory consolidation, not the generator, validator, or acceptance rule. The negative hard RL result therefore does not refute \MERA; it shows that adding structural repair-operator guidance and gap hints did not improve the measured setting under the same attempt and validation budget.

\subsection{Safety and execution boundaries}

The model emits text; the controller extracts code, writes the target file, runs validators, records memory, and decides acceptance. Validation commands are executed through explicit argument lists, an allowlist, timeouts, and truncated outputs. Let \(\mathcal{C}_{\mathrm{cmd}}\) be the allowlisted command family. Validation execution satisfies
\begin{equation}
 c_t\in\mathcal{C}_{\mathrm{cmd}},
\label{eq:cmd_allow}
\end{equation}
with runtime and output bounds
\begin{equation}
 d_t\leq d_{\max},\qquad |\mathrm{stdout}_t|+|\mathrm{stderr}_t|\leq B_{\max}.
\label{eq:exec_bounds}
\end{equation}
Untrusted generated code remains a known risk, and sandboxing, isolation, and benchmark validity are active concerns in LLM code evaluation \cite{m9_3,m9_4,e7_2}. Accordingly, the system supports an operational claim only: the workflow is command-bounded and validation-gated. It does not provide formal safety, formal verification, or complete protection against memory poisoning.

\subsection{Algorithm}

Algorithm~\ref{alg:mera} gives the online refinement loop. Algorithm~\ref{alg:td} gives the delayed-credit dispatch.

\begin{algorithm}[t]
\caption{\MERA online validation-conditioned refinement}
\label{alg:mera}
\begin{algorithmic}[1]
\Require Task \(q\), workspace \(W\), initial file \(x_0\), frozen model \(G_\omega\), validator \(V\), memory \(M\), skill library \(L\), LinUCB controller \(B^{\mathrm{ret}}\), optional decoding bandit \(B^{\mathrm{dec}}\), attempt budget \(T\)
\Ensure Accepted implementation \(x^*\) or failed result with validation evidence
\State \(H_0\gets\operatorname{InitializeHistory}(q,x_0)\); \(F_{-1}\gets\emptyset\); \(\tau\gets []\)
\For{\(t=0,1,\ldots,T-1\)}
    \State \(s_t\gets(q,x_t,H_t,F_{t-1},M,L,c_t)\); \(\phi_t\gets\Phi(s_t)\)
    \State Select optional decoding action \(a_t^{\mathrm{dec}}\)
    \State \(a_t^{\mathrm{ret}}\gets\argmax_{a\in\mathcal{A}_{\mathrm{ret}}}U_t(a)\)
    \State \(\zeta_t\gets\operatorname{Fingerprint}(q,x_t,F_{t-1})\)
    \State \((R_t,S_t)\gets\operatorname{RetrieveAndMatch}(M,L,\zeta_t,a_t^{\mathrm{ret}})\)
    \State \(P_t\gets\Gamma(q,x_t,H_t,F_{t-1},\mathcal{S}(R_t,S_t))\)
    \State \(y_t\gets G_\omega(P_t)\); \(\hat{x}_t\gets\mathcal{E}(y_t)\)
    \If{\(\hat{x}_t=\emptyset\)}
        \State \(F_t\gets\operatorname{ExtractionFailureReport}()\); \(r_t\gets\operatorname{Reward}(F_t,t)\)
        \State Append \((s_t,a_t,r_t)\) to \(\tau\); append extraction feedback to \(H_{t+1}\)
        \State \textbf{continue}
    \EndIf
    \State Materialize \(\hat{x}_t\) as \texttt{algorithm.py}
    \State \(F_t\gets V(q,\hat{x}_t,W)\); \(r_t\gets\operatorname{Reward}(F_t,t)\)
    \If{retrieval action is attributable}
        \State Update \(B^{\mathrm{ret}}\) using Eqs.~\eqref{eq:a_update}--\eqref{eq:b_update}
    \EndIf
    \State Persist episodic record \(e_t\) in \(M\); append \((s_t,a_t,r_t)\) to \(\tau\)
    \If{\(A_t=1\)}
        \State Harvest AST-derived skills from \(\hat{x}_t\) into \(L\)
        \State Dispatch delayed credit using Algorithm~\ref{alg:td}
        \State \Return \(\hat{x}_t\)
    \EndIf
    \State Append validator feedback to \(H_{t+1}\); truncate history under context budget
\EndFor
\State Dispatch delayed credit using Algorithm~\ref{alg:td}
\State \Return failed result with final validation report
\end{algorithmic}
\end{algorithm}

\begin{algorithm}[t]
\caption{TD(\(\lambda\))-style delayed credit dispatch}
\label{alg:td}
\begin{algorithmic}[1]
\Require Trajectory \(\tau=[(s_0,a_0,r_0),\ldots,(s_{T'},a_{T'},r_{T'})]\), \(\gamma\), \(\lambda_{\mathrm{td}}\), \(\alpha_{\mathrm{td}}\), clipping bound \(\Delta\), maximum weight \(w_{\max}\)
\Ensure Weighted updates to attributable side learners
\State Initialize \(E_i\gets0\) for each source step
\For{\(j=0,1,\ldots,T'\)}
    \State \(\delta_j\gets\clip_{[-\Delta,\Delta]}(r_j+\gamma V_\nu(s_{j+1})-V_\nu(s_j))\)
    \For{\(i=0,1,\ldots,j\)} \State \(E_i\gets\gamma\lambda_{\mathrm{td}}E_i\) \EndFor
    \State \(E_j\gets E_j+1\)
    \For{\(i=0,1,\ldots,j\)}
        \If{\(E_i\) is below the eligibility floor} \State \textbf{continue} \EndIf
        \State \(w_{i,j}\gets\min(w_{\max},\alpha_{\mathrm{td}}E_i)\)
        \State \(\bar r_{i,j}\gets w_{i,j}\delta_j\)
        \If{\(a_i\) contains attributable retrieval decision}
            \State Update LinUCB with \((a_i^{\mathrm{ret}},\phi_i,\delta_j,w_{i,j})\)
        \EndIf
        \If{\(a_i\) contains decoding decision}
            \State Apply weighted decoding-bandit update
        \EndIf
        \State Persist trace update \((i,j,\delta_j,E_i,w_{i,j})\)
    \EndFor
\EndFor
\end{algorithmic}
\end{algorithm}

\FloatBarrier

\section{Results}

\subsection{Experimental platform and protocol}

All benchmark runs reported in this manuscript were executed on the author's local workstation, not on a managed cloud service. This matters for reproducibility because wall-clock time is hardware- and runtime-dependent. The same frozen model, target-file convention, validator, and attempt budget were used across the compared conditions. Repeat indices identify repeated local runs, not fixed random seeds; exact stochastic replay is not guaranteed because local model serving and optional bandit sampling were not globally seed-pinned in the recorded artifact. Table~\ref{tab:platform} records the execution setting used for the reported measurements.

\begin{table}[!htbp]
\centering
\caption{Experimental platform and implementation setting used for all reported benchmark runs.}
\label{tab:platform}
\small
\begin{tabularx}{\textwidth}{L{0.31\textwidth}Y}
\toprule
Item & Setting \\
\midrule
Execution host & Single local workstation through WSL2; no cloud execution was used for the reported benchmark runs. \\
Operating system & Ubuntu 24.04.4 LTS on Linux 6.6.87.2-microsoft-standard-WSL2, x86\_64. \\
CPU & 12th Gen Intel Core i7-12700H; 10 physical cores / 20 logical CPUs; 24 MiB L3 cache. \\
Memory & 31 GiB system RAM available inside the WSL2 environment; 8 GiB swap configured. \\
GPU & NVIDIA GeForce RTX 3060 Laptop GPU with 6 GiB VRAM; NVIDIA driver 591.74 reported by \texttt{nvidia-smi}. \\
Runtime stack & Python 3.12.3; Ollama client 0.19.0; local \texttt{qwen3:4b} model served by Ollama during the recorded benchmark runs. \\
Execution mode & Local Python CLI artifact with bounded repair attempts; generated candidates target a single \texttt{algorithm.py} file. \\
Attempt budget & Maximum of three attempts per run in the reported benchmark phases. \\
Validation sequence & Syntax, undefined-name, specification/interface contract, import, runtime, and behavior checks when available. \\
Compared conditions & \REFINE{} self-refinement baseline, \MERA{} proposed stack, and \GRACE{} investigated extension. \\
Primary metrics & Strict validation success, mean attempts, mean wall-clock duration, total score, and primary residual failure type. \\
\bottomrule
\end{tabularx}
\end{table}

Table~\ref{tab:protocol} reports the available benchmark phases. The main comparison is phase1c because it repeats each hard task three times under all three conditions. Phase1b is broader but has one run per task, and phase1a is only a pilot sanity check.

\begin{table}[!htbp]
\centering
\caption{Available experimental phases from the \texttt{manuscript-test-results} artifact directory.}
\label{tab:protocol}
\scriptsize
\begin{tabularx}{\textwidth}{L{0.23\textwidth}ccL{0.23\textwidth}Y}
\toprule
Phase & Tasks & Repeats & Conditions & Role in the manuscript \\
\midrule
phase1a & 2 & 1 & \REFINE{}, \MERA{}, \GRACE{} & Pilot feasibility evidence only; too small for the main claim. \\
phase1b\_rl\_r1 & 11 & 1 & \REFINE{}, \MERA{}, \GRACE{} & Descriptive RL-task comparison; no repeated stochastic trials. \\
phase1c hard repeated & 3 & 3 & \REFINE{}, \MERA{}, \GRACE{} & Strongest available controlled comparison and primary empirical evidence. \\
phase1b\_full\_r1 & not comparable & 1 & \REFINE{} only & Not used as a comparative result because only one condition is present. \\
\bottomrule
\end{tabularx}
\end{table}

The artifact also contains tests, configuration files, validation components, and benchmark scripts. A direct artifact check compiled the source and tests successfully using \texttt{compileall}. Running the full test suite with plugin autoloading disabled collected 203 tests; 201 passed and 2 failed. The two failures are configuration/expectation mismatches: one test still expects \GRACE{} to be enabled by default, while the delivered configuration disables it, and one memory-path test expects deferred rejection although the current implementation raises an immediate workspace-escape error. This supports an implementation claim, but not a fully green artifact claim. The reported benchmark used the direct loop: extracted candidates were written to the target file before validation, and validation determined acceptance rather than write authorization. Staged or pre-materialization design notes in auxiliary artifact documentation should therefore be interpreted as design background, not as the execution path used for the reported measurements.

\begin{table}[!htbp]
\centering
\caption{Artifact-level evidence observed during manuscript preparation.}
\label{tab:artifact}
\small
\begin{tabularx}{\textwidth}{L{0.26\textwidth}L{0.34\textwidth}Y}
\toprule
Check & Observation & Claim implication \\
\midrule
Python compilation & Source and test files compile under \texttt{compileall}. & The Python artifact is syntactically loadable. \\
Pytest suite & 203 collected; 201 passed; 2 failed. & Component evidence is partial; the suite is not fully green. \\
Validation pipeline & The implemented order is syntax, undefined-name, spec-contract, import, runtime, behavior. & Validator-gated repair is implemented. \\
Direct loop & Candidates are written to \texttt{algorithm.py} before validation. & The evaluated loop should not be described as pre-materialization staging. \\
\bottomrule
\end{tabularx}
\end{table}

\subsection{Main comparison: phase1c hard RL}

Table~\ref{tab:phase1c} reports the strongest available comparison. In this hard-task subset, \MERA{} solved 8 of 9 runs, while \REFINE{} and \GRACE{} solved none. \MERA{} also required fewer attempts and less run-level wall-clock time on average. The Wilson score intervals are included only to make the small sample size visible; they are descriptive intervals rather than confirmatory inference or a substitute for a larger repeated-task statistical analysis \cite{f1_3,e4_1,e4_2}.

One \GRACE{} run in phase1c terminated with a client-level error and is retained as a failed run rather than dropped. Thus, the reported 0/9 strict-success result is an end-to-end artifact outcome under the recorded execution protocol, not a claim that every \GRACE{} failure was caused by the same algorithmic mechanism.

The result is also not a clean sweep for the proposed method. \MERA failed one Q-learning repeat in the hard subset, and the broader phase1b comparison is only a descriptive 11-task single-repeat check in which all three conditions were already strong. The honest contribution is therefore modest: the recorded evidence suggests that validation-grounded memory and retrieval control can improve robustness on this local hard subset, not that the controller is universally reliable or that every component is independently necessary.

\begin{table}[!htbp]
\centering
\caption{phase1c hard RL comparison with three tasks, three repeats, and a three-attempt budget.}
\label{tab:phase1c}
\scriptsize
\begin{tabular}{lrrrrr}
\toprule
Condition & Runs & Successes & Success rate & 95\% Wilson CI & Attempts / time \\
\midrule
\REFINE{} & 9 & 0 & 0.000 & [0.000, 0.299] & 3.000 / 700.086 s \\
\MERA{} & 9 & 8 & 0.889 & [0.565, 0.980] & 1.556 / 294.086 s \\
\GRACE{} & 9 & 0 & 0.000 & [0.000, 0.299] & 2.778 / 1367.038 s \\
\bottomrule
\end{tabular}
\end{table}

\begin{figure}[!htbp]
\centering
\includegraphics[width=.98\textwidth]{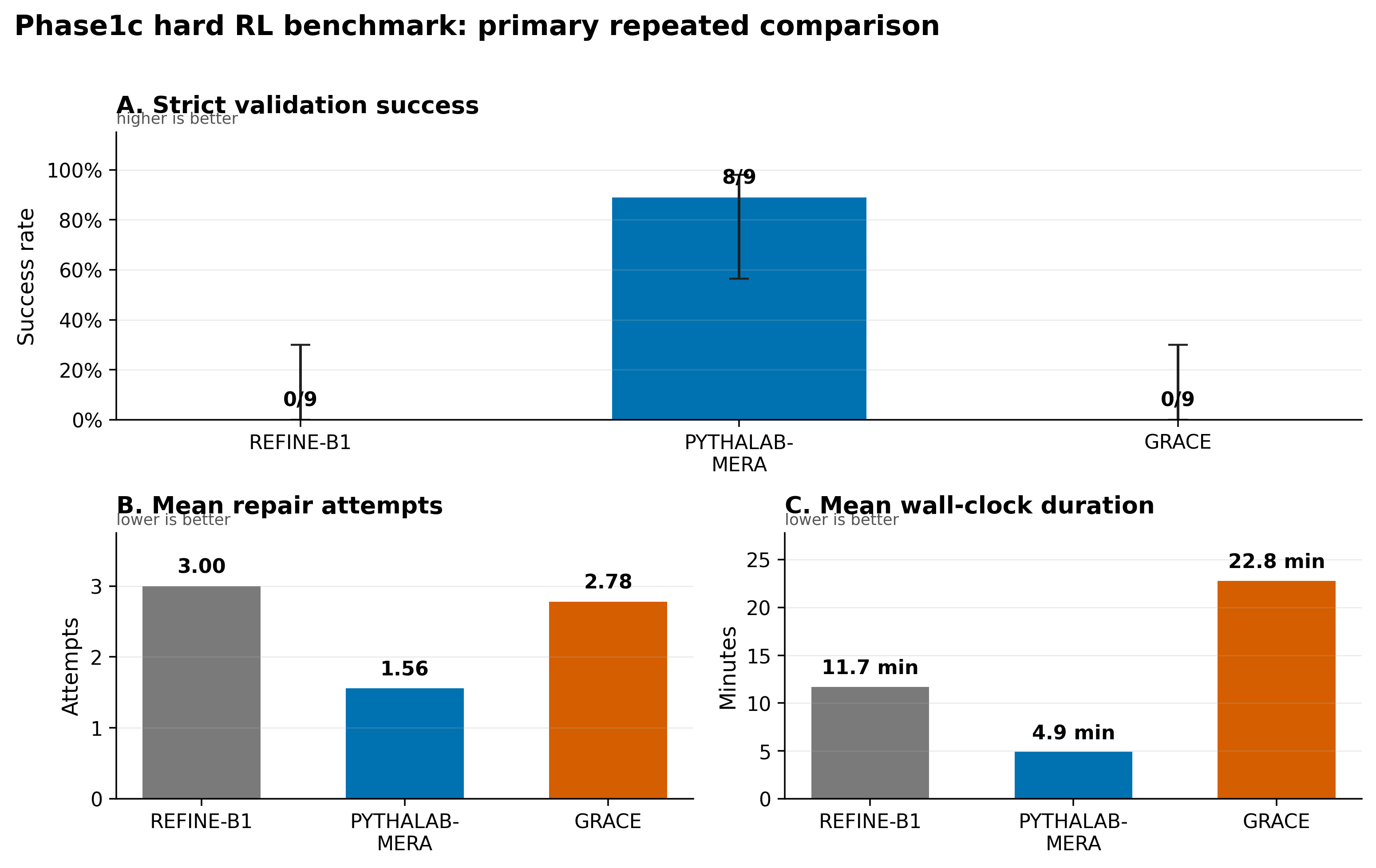}
\caption{Primary phase1c hard RL outcome. Success is reported as strict validator pass rate with 95\% Wilson intervals; attempts and wall-clock time are run-level efficiency measures. The figure shows the same bounded conclusion as the table: \MERA{} is the only condition with nonzero strict success in this measured setting, and it also uses fewer attempts and less time on average.}
\label{fig:phase1c_main}
\end{figure}

The measured outcome supports a narrow claim: in the supplied hard RL subset, the validation-grounded retrieval-memory controller outperformed the compared variants. It does not prove that every internal component is necessary, because the logs do not include separate no-memory, no-LinUCB, no-TD, or no-skill ablations.

\subsection{Per-task behavior and residual failures}

Table~\ref{tab:per_task} gives the phase1c per-task breakdown. \MERA{} solved all SARSA and value-iteration runs and solved two of three Q-learning runs. The remaining \MERA{} failure was runtime-related. \GRACE{} introduced substantial runtime and import-failure overhead, which is why it is treated as an investigated extension rather than the proposed method.

\begin{table}[!htbp]
\centering
\caption{Per-task phase1c hard RL results. Success counts are out of three repeats. Failure codes: RUN = runtime, SEM = semantic, TYPE = type, IMP = import, UNK = unknown.}
\label{tab:per_task}
\scriptsize
\begin{tabularx}{\textwidth}{@{}L{0.16\textwidth}L{0.18\textwidth}rrrrY@{}}
\toprule
Condition & Task & Runs & Succ. & Attempts & Time (s) & Failure types \\
\midrule
\REFINE{} & Q-learning & 3 & 0 & 3.000 & 599.014 & SEM, TYPE \\
\REFINE{} & SARSA & 3 & 0 & 3.000 & 687.325 & RUN, SEM \\
\REFINE{} & Value iteration & 3 & 0 & 3.000 & 813.844 & RUN \\
\MERA{} & Q-learning & 3 & 2 & 1.667 & 317.268 & RUN, UNK \\
\MERA{} & SARSA & 3 & 3 & 2.000 & 303.262 & UNK \\
\MERA{} & Value iteration & 3 & 3 & 1.000 & 261.665 & UNK \\
\GRACE{} & Q-learning & 3 & 0 & 2.333 & 1052.016 & RUN, UNK \\
\GRACE{} & SARSA & 3 & 0 & 3.000 & 2214.300 & IMP, RUN, SEM \\
\GRACE{} & Value iteration & 3 & 0 & 3.000 & 834.717 & RUN \\
\bottomrule
\end{tabularx}
\end{table}

\begin{figure}[!htbp]
\centering
\includegraphics[width=.98\textwidth]{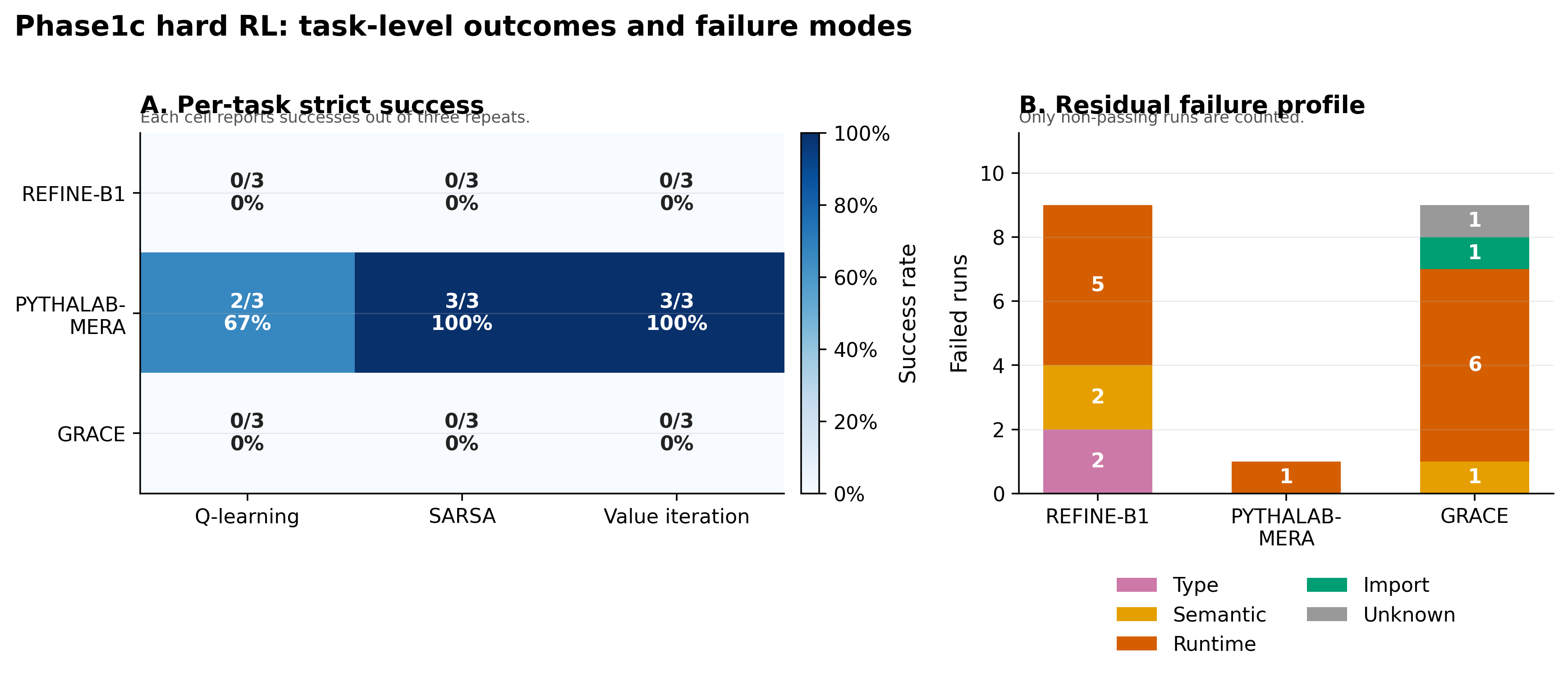}
\caption{Per-task success and residual failure distribution in phase1c. Cell labels show successes out of three repeats; failure bars count only non-passing runs. The profile indicates that \MERA{} reduced hard-task failures in this subset, while \GRACE{} incurred more runtime and import failures.}
\label{fig:per_task_failure}
\end{figure}

Residual failure analysis matters because coding-agent failures are not homogeneous. They can arise from extraction problems, interface mismatches, dependency/import failures, runtime exceptions, or semantic test failures \cite{e6_1,e6_2,e6_3}. In this benchmark, the dominant residual failures for \REFINE{} and \GRACE{} are runtime and semantic failures, whereas \MERA{} has only one failed run in the primary phase.

\subsection{Secondary descriptive comparison: phase1b RL}

Table~\ref{tab:phase1b} reports the 11-task phase1b RL comparison with one run per task. \MERA{} solved all 11 tasks, while \REFINE{} and \GRACE{} each solved 10. This is a small absolute difference, not a decisive benchmark win. Because this phase has no repeated stochastic trials, it is descriptive evidence rather than the main controlled result.

\begin{table}[!htbp]
\centering
\caption{phase1b RL comparison with 11 tasks and one run per task.}
\label{tab:phase1b}
\scriptsize
\begin{tabular}{lrrrrr}
\toprule
Condition & Runs & Successes & Success rate & 95\% Wilson CI & Attempts / time \\
\midrule
\REFINE{} & 11 & 10 & 0.909 & [0.623, 0.984] & 1.455 / 570.295 s \\
\MERA{} & 11 & 11 & 1.000 & [0.741, 1.000] & 1.091 / 262.774 s \\
\GRACE{} & 11 & 10 & 0.909 & [0.623, 0.984] & 1.273 / 262.136 s \\
\bottomrule
\end{tabular}
\end{table}

\begin{figure}[!htbp]
\centering
\includegraphics[width=.98\textwidth]{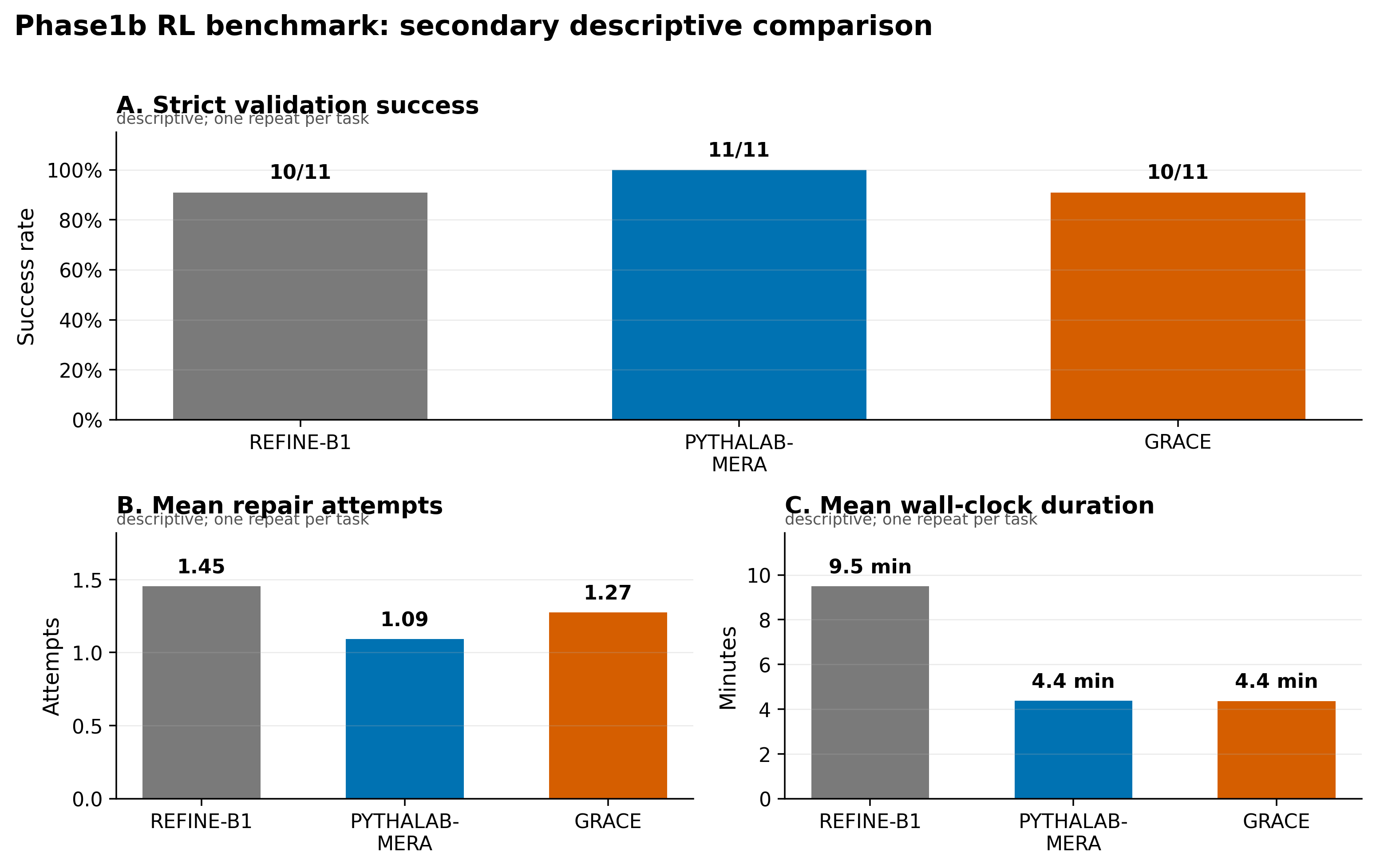}
\caption{Secondary phase1b RL comparison. This plot is descriptive because each of the 11 tasks was run once per condition; it is included to show that the primary phase1c finding is not the only observed efficiency signal, but it is not treated as the main repeated-trial claim.}
\label{fig:phase1b_summary}
\end{figure}

\subsection{Claim gate}

Table~\ref{tab:claim_gate} states what the current evidence supports. The strongest empirical claim is limited to the measured hard RL setting. The current evidence does not isolate each \MERA{} component because no full ablation separating episodic memory, LinUCB retrieval selection, TD credit, skill reuse, and optional decoding control was completed. Prior work motivates such ablations \cite{e2_1,e2_2,e2_3,e2_4}, but literature does not replace project-level ablation evidence.

\begin{table}[!htbp]
\centering
\caption{Claim gate for the current manuscript.}
\label{tab:claim_gate}
\small
\begin{tabularx}{\textwidth}{L{0.33\textwidth}Y}
\toprule
Claim & Current status and allowed wording \\
\midrule
Local validation-conditioned workflow is implemented. & Supported by source, configuration, compile check, and partial tests. Allowed wording: the artifact implements a local validation-conditioned coding-agent workflow. \\
\MERA{} improves over \REFINE{} and \GRACE{} on phase1c hard RL. & Supported only under the measured setting. Allowed wording: \MERA{} outperformed the compared variants in phase1c hard RL. \\
Each \MERA{} component independently improves performance. & Not isolated. Allowed wording: component contributions require future ablations. \\
Broad superiority over self-refinement on all RL tasks. & Unsupported. Allowed wording: the broad phase1b comparison is a descriptive consistency check with a small absolute success-rate difference. \\
TD($\lambda$) credit links later validation outcomes to earlier context decisions. & Supported as implementation and bounded update; no convergence proof. \\
Validator-defined zero cost equals acceptance. & Supported as a formal definition/equivalence under the implemented gate. \\
Safety boundaries reduce execution authority. & Supported only as operational gating; formal safety is not established. \\
Broad general-purpose code-generation superiority. & Unsupported; broader benchmarks and deployments are future work. \\
\bottomrule
\end{tabularx}
\end{table}

\FloatBarrier
\section{Discussion}

The results support a bounded interpretation of \MERA. The clearest finding is that, in the recorded hard RL subset, a frozen-LLM agent equipped with validation-grounded episodic retrieval and delayed credit achieved more strict successes with fewer attempts than the self-refinement baseline and the investigated \GRACE{} extension. This supports the claim that controller-level adaptation over prompt evidence can be useful when validation feedback is available. The claim is intentionally small: the proposed controller appears useful in one local repeated hard-task setting, while still failing one hard Q-learning repeat and while lacking component ablations. It does not show that every component is necessary, and it does not prove that the system generalizes beyond the measured RL task distribution.

Compared with static RAG and passive memory, \MERA changes the role of retrieval. Static RAG systems provide relevant context before generation, but do not generally learn which retrieval mode is useful under a particular failure trajectory. Agentic RAG formalizations argue that retrieval should be treated as a state-dependent action \cite{d1_2}, and frozen-LLM code maintenance systems show that external retrieval and reflection can approach fine-tuning-level utility in some settings \cite{d1_3}. \MERA follows this direction but grounds updates in deterministic validation rather than in LLM-as-judge reflection. This distinction is important because LLM judges for code can exhibit position bias, poor reliability, agreeableness bias, and weak calibration with true correctness \cite{m10_1,m10_2,m10_4,m10_6}.

Compared with self-reflection and self-debugging systems, \MERA remains deliberately external. Intrinsic self-reflection methods can internalize debugging trajectories into model weights \cite{d2_1}, while self-directed debugging can improve repair by exposing richer execution traces \cite{d2_3}. \MERA does not attempt to teach the generator new reasoning behavior. Instead, it uses a frozen model and learns which external evidence to supply. This makes the method lightweight and compatible with local deployment, but it also means the approach cannot fix weaknesses of the underlying model except through better context and feedback.

Compared with automated program repair, \MERA is closer to LLM-based iterative repair than to classical edit-template search. LLM repair work shows that prompt context, fault localization, and iterative validation feedback matter for patch quality \cite{d4_1,d4_2,d4_3}. \MERA contributes a specific controller mechanism for selecting context from persistent memory and skills, rather than a new edit operator or formal APR search space. Therefore, the paper should not claim formal patch correctness or superiority over mature APR systems. The accepted output is validator-passing under the available tests and checks, not necessarily semantically complete for all hidden requirements.

The \GRACE result is scientifically useful despite being negative. \GRACE adds repair-operator consolidation and related mechanisms on top of the proposed controller, but in the primary hard RL setting it achieved no strict successes and incurred much higher mean duration. This supports a conservative architecture lesson: additional structure in an agent loop is not automatically beneficial. This negative result should not be read as a general refutation of structural repair operators, intent--execution-gap features, or consolidation mechanisms. It shows only that, in this implementation, model, task subset, and three-attempt budget, the added \GRACE machinery did not translate into validation success and increased observed cost. Prior work on requirement-driven agents, federated repair, and privacy-preserving repair suggests many plausible extensions \cite{d7_2,d7_4}, but each extension needs controlled evaluation under the same task and cost budget before becoming part of the proposed method.

The safety boundary is also limited. \MERA reduces model execution authority by separating generation from validation, using allowlisted commands, timeouts, and workspace-bound memory paths. However, memory poisoning, prompt-injection through recalled context, and privacy leakage from retrieval datastores remain serious risks \cite{d5_1,d5_3}. The current implementation does not provide differential privacy, cryptographic provenance, immutable audit logs, or formal sandbox proofs. It is therefore best framed as a local, single-tenant research artifact with operational gates, not as a production-safe autonomous coding system.

Finally, the evaluation leaves several important gaps. A full ablation should measure \MERA without episodic memory, without LinUCB retrieval selection, without TD delayed credit, without skill reuse, and with random or heuristic retrieval. A larger benchmark should use more tasks, more repetitions, paired or mixed-effects analysis, and component-level latency instrumentation. The current logs measure run-level duration, but not a detailed breakdown among retrieval, prompt construction, generation, validation, memory persistence, and delayed-credit dispatch. Without these measurements, the result shows end-to-end overhead differences but cannot attribute cost to a specific internal stage.

\section{Conclusion}

This paper presented \MERA, a memory-enhanced validation-conditioned controller for local code generation with a frozen language model. The method formalizes repair as a finite-horizon control problem over prompt evidence, episodic memory, validation-shaped reward, delayed credit, and AST-derived skill reuse. The implementation provides a traceable local artifact with a fail-fast validation pipeline and bounded command execution. In the measured hard RL benchmark, \MERA achieved higher strict success and lower mean attempts than both the self-refinement baseline and the investigated \GRACE{} extension. The supported claim is intentionally narrow and realistic: the implemented controller improved validation success under one recorded local benchmark setting, but it did not solve every run and does not yet establish component-level causal contribution, formal correctness, formal safety, state-of-the-art performance, or general-purpose software synthesis. Future work should prioritize component ablations, larger stochastic evaluations with task-level correlation modeling, latency instrumentation, stronger memory governance, and external benchmarks beyond the current RL task subset.


\begin{thebibliography}{99}
\sloppy
\bibitem{d1_1} Tablan, V., Taylor, S., Hurtado, G., Bernhem, K., Uhrenholt, A., Farei, G., \& Moilanen, K. (2025). Smarter together: Creating agentic communities of practice through shared experiential learning. arXiv. \url{https://arxiv.org/abs/2511.08301}
\bibitem{d1_2} Mishra, S., Niroula, S., Yadav, U., Thakur, D., Gyawali, S., \& Gaire, S. (2026). Sok: Agentic retrieval-augmented generation (rag): Taxonomy, architectures, evaluation, and research directions. arXiv. \url{https://arxiv.org/abs/2603.07379}
\bibitem{d1_3} Shanto, M. H., Asaduzzaman, M., \& Ngom, A. (2026). RAG-Reflect: Agentic Retrieval-Augmented Generation with Reflections for Comment-Driven Code Maintenance on Stack Overflow. arXiv. \url{https://arxiv.org/abs/2604.22217}
\bibitem{d2_1} Jiang, J., Shen, J., Kim, S., Yoo, K. M., Kim, J., \& Kim, S. (2026). ReflexiCoder: Teaching Large Language Models to Self-Reflect on Generated Code and Self-Correct It via Reinforcement Learning. arXiv. \url{https://arxiv.org/abs/2603.05863}
\bibitem{d2_3} Wu, L., Pei, Y., Yang, Z., Li, K., Lu, Z., Tan, H., ... \& Hao, D. (2026). DebugRepair: Enhancing LLM-Based Automated Program Repair via Self-Directed Debugging. arXiv. \url{https://arxiv.org/abs/2604.19305}
\bibitem{d3_1} Sapkota, R., Roumeliotis, K. I., \& Karkee, M. (2025). Vibe coding vs. agentic coding: Fundamentals and practical implications of agentic ai. arXiv. \url{https://arxiv.org/abs/2505.19443}
\bibitem{d4_1} Xia, C. S., Wei, Y., \& Zhang, L. (2023, May). Automated program repair in the era of large pre-trained language models. In 2023 IEEE/ACM 45th International Conference on Software Engineering (ICSE) (pp. 1482-1494). IEEE. \url{https://doi.org/10.1109/ICSE48619.2023.00129}
\bibitem{d4_2} Farzandway, M., \& Ghassemi, F. (2025). Automated repair of c programs using large language models. arXiv. \url{https://arxiv.org/abs/2509.01947}
\bibitem{d4_3} Fan, Z., Gao, X., Mirchev, M., Roychoudhury, A., \& Tan, S. H. (2023, May). Automated repair of programs from large language models. In 2023 IEEE/ACM 45th International Conference on Software Engineering (ICSE) (pp. 1469-1481). IEEE. \url{https://doi.org/10.1109/ICSE48619.2023.00128}
\bibitem{d5_1} Narajala, V. S., \& Narayan, O. (2025). Securing agentic ai: A comprehensive threat model and mitigation framework for generative ai agents. arXiv. \url{https://arxiv.org/abs/2504.19956}
\bibitem{d5_3} Huang, Y., Gupta, S., Zhong, Z., Li, K., \& Chen, D. (2023, December). Privacy implications of retrieval-based language models. In Proceedings of the 2023 Conference on Empirical Methods in Natural Language Processing (pp. 14887-14902). \url{https://doi.org/10.18653/v1/2023.emnlp-main.921}
\bibitem{d7_2} Kuang, S., Tian, Z., Lin, K., Tao, C., Wang, S., Bai, H., ... \& Chen, J. (2026). REAgent: Requirement-Driven LLM Agents for Software Issue Resolution. arXiv. \url{https://arxiv.org/abs/2604.06861}
\bibitem{d7_4} Wang, C., Zhou, Z., Wang, C., Sun, Y., Yang, S., Yuan, Y., ... \& Han, Z. (2025, December). End-to-End Secure Code Repair with Context-Aware Anonymization and Isolated Agent Execution. In 2025 IEEE International Conference on Blockchain Technology and Information Security (ICBCTIS) (pp. 1-8). IEEE. \url{https://doi.org/10.1109/ICBCTIS66509.2025.11387695}
\bibitem{e1_3} Madaan, A., Tandon, N., Gupta, P., Hallinan, S., Gao, L., Wiegreffe, S., ... \& Clark, P. (2023). Self-refine: Iterative refinement with self-feedback. Advances in neural information processing systems, 36, 46534-46594. \url{https://arxiv.org/abs/2303.17651}
\bibitem{e1_4} Gehring, J., Zheng, K., Copet, J., Mella, V., Carbonneaux, Q., Cohen, T., \& Synnaeve, G. (2024). Rlef: Grounding code llms in execution feedback with reinforcement learning. arXiv. \url{https://arxiv.org/abs/2410.02089}
\bibitem{e2_1} Chen, Y., Sun, Y., Wang, H., Zhang, X., Shen, X., Li, W., \& Zhang, W. (2026). Contextual Counterfactual Credit Assignment for Multi-Agent Reinforcement Learning in LLM Collaboration. arXiv. \url{https://arxiv.org/abs/2603.06859}
\bibitem{e2_2} Yang, X., Li, W., Sheng, J., Shen, C., Hua, Y., \& Wang, X. (2025). Agentic Episodic Control. arXiv. \url{https://arxiv.org/abs/2506.01442}
\bibitem{e2_3} Zhang, H., Long, Q., Bao, J., Feng, T., Zhang, W., Yue, H., \& Wang, W. (2026). MemSkill: Learning and Evolving Memory Skills for Self-Evolving Agents. arXiv. \url{https://arxiv.org/abs/2602.02474}
\bibitem{e2_4} Chen, S., Gai, J., Zhou, R., Zhang, J., Zhu, T., Li, J., ... \& Teh, Y. W. (2026). SkillCraft: Can LLM Agents Learn to Use Tools Skillfully?. arXiv. \url{https://arxiv.org/abs/2603.00718}
\bibitem{e4_1} Gonzalez, M. A. A., Hernandez, M. B., Perez, M. A. P., Orozco, B. L., Soto, J. T. C., \& Malagon, S. (2025). Do Repetitions Matter? Strengthening Reliability in LLM Evaluations. arXiv. \url{https://arxiv.org/abs/2509.24086}
\bibitem{e4_2} Gallo, R. J., Baiocchi, M., Savage, T. R., \& Chen, J. H. (2025). Establishing best practices in large language model research: an application to repeat prompting. Journal of the American Medical Informatics Association, 32(2), 386-390. \url{https://doi.org/10.1093/jamia/ocae294}
\bibitem{e6_1} Ning, K., Chen, J., Zhang, J., Li, W., Wang, Z., Feng, Y., ... \& Zheng, Z. (2026). Defining and Detecting the Defects of Large Language Model-Based Autonomous Agents. IEEE Transactions on Software Engineering. \url{https://doi.org/10.1109/TSE.2026.3658554}
\bibitem{e6_2} Chen, Z., Ma, W., \& Jiang, L. (2025). Beyond Final Code: A Process-Oriented Error Analysis of Software Development Agents in Real-World GitHub Scenarios. arXiv. \url{https://arxiv.org/abs/2503.12374}
\bibitem{e6_3} Barke, S., Goyal, A., Khare, A., Singh, A., Nath, S., \& Bansal, C. (2026). AgentRx: Diagnosing AI Agent Failures from Execution Trajectories. arXiv. \url{https://arxiv.org/abs/2602.02475}
\bibitem{e7_2} Zhu, Y., Jin, T., Pruksachatkun, Y., Zhang, A., Liu, S., Cui, S., Kapoor, S., Longpre, S., Meng, K., Weiss, R., Barez, F., Gupta, R., Dhamala, J., Merizian, J., Giulianelli, M., Coppock, H., Ududec, C., Sekhon, J., Steinhardt, J., Kellermann, A., Schwettmann, S., Zaharia, M., Stoica, I., Liang, P., \& Kang, D. (2025). Establishing best practices for building rigorous agentic benchmarks. arXiv. \url{https://arxiv.org/abs/2507.02825}
\bibitem{f1_1} Li, L., Chu, W., Langford, J., \& Schapire, R. E. (2010). A contextual-bandit approach to personalized news article recommendation. In \textit{Proceedings of the 19th International Conference on World Wide Web} (pp. 661--670). ACM. \url{https://doi.org/10.1145/1772690.1772758}
\bibitem{f1_2} Sutton, R. S. (1988). Learning to predict by the methods of temporal differences. \textit{Machine Learning, 3}, 9--44. \url{https://doi.org/10.1007/BF00115009}
\bibitem{f1_3} Wilson, E. B. (1927). Probable inference, the law of succession, and statistical inference. \textit{Journal of the American Statistical Association, 22}(158), 209--212. \url{https://doi.org/10.1080/01621459.1927.10502953}
\bibitem{i1_1} Gautam, D., Garg, S., Jang, J., Sundaresan, N., \& Zilouchian Moghaddam, R. (2025). RefactorBench: Evaluating stateful reasoning in language agents through code. arXiv. \url{https://arxiv.org/abs/2503.07832}
\bibitem{i2_1} Arimbur, J. J. (2026). How Many Tries Does It Take? Iterative Self-Repair in LLM Code Generation Across Model Scales and Benchmarks. arXiv. \url{https://arxiv.org/abs/2604.10508}
\bibitem{i3_1} Dai, D., Liu, M., Li, A., Cao, J., Wang, Y., Wang, C., ... \& Zheng, Z. (2025). Feedbackeval: A benchmark for evaluating large language models in feedback-driven code repair tasks. arXiv. \url{https://arxiv.org/abs/2504.06939}
\bibitem{i4_1} Sunil, B. D., Sinha, I., Maheshwari, P., Todmal, S., Mallik, S., \& Mishra, S. (2026). Memory poisoning attack and defense on memory based llm-agents. arXiv. \url{https://arxiv.org/abs/2601.05504}
\bibitem{i4_2} Srivastava, S. S., \& He, H. (2025). MemoryGraft: Persistent compromise of LLM agents via poisoned experience retrieval. arXiv. \url{https://arxiv.org/abs/2512.16962}
\bibitem{i4_4} Liu, F., Zhang, Y., Luo, J., Dai, J., Chen, T., Yuan, L., Yu, Z., Shi, Y., Li, K., Zhou, C., Chen, H., \& Yang, M. (2025). Make agent defeat agent: Automatic detection of taint-style vulnerabilities in LLM-based agents. In 34th USENIX Security Symposium (USENIX Security 25). \url{https://www.usenix.org/conference/usenixsecurity25/presentation/liu-fengyu}
\bibitem{i5_1} Wang, W., Wang, Y., Joty, S., \& Hoi, S. C. (2023, November). Rap-gen: Retrieval-augmented patch generation with codet5 for automatic program repair. In Proceedings of the 31st ACM Joint European Software Engineering Conference and Symposium on the Foundations of Software Engineering (pp. 146-158). \url{https://doi.org/10.1145/3611643.3616256}
\bibitem{i5_4} Lee, H., \& Yang, G. (2025, November). AgentRepair: Multi-Agent, AST-Anchored, Retrieval-Augmented Program Repair for Cold-Start Environments. In 2025 12th International Conference on Dependable Systems and Their Applications (DSA) (pp. 121-132). IEEE. \url{https://doi.org/10.1109/DSA66321.2025.00025}
\bibitem{i5_6} Chondamrongkul, N., Kyaw, M. P. P., Ko, S. M., Paing, P. P., Swe, M. K. T., \& Hongthong, T. (2026). RepoAI: Automated code refactoring through multi-agent LLM orchestration and retrieval-augmented generation. Science of Computer Programming, 253, Article 103477. \url{https://doi.org/10.1016/j.scico.2026.103477}
\bibitem{i6_2} Bui, N. D. (2026). Building Effective AI Coding Agents for the Terminal: Scaffolding, Harness, Context Engineering, and Lessons Learned. arXiv. \url{https://arxiv.org/abs/2603.05344}
\bibitem{i6_6} Rombaut, B. (2026). Inside the Scaffold: A Source-Code Taxonomy of Coding Agent Architectures. arXiv. \url{https://arxiv.org/abs/2604.03515}
\bibitem{m1_1} Kim, M. H. (2025). Bridging Symbolic Control and Neural Reasoning in LLM Agents: The Structured Cognitive Loop. arXiv. \url{https://arxiv.org/abs/2511.17673}
\bibitem{m2_1} Poon, M., Dai, X., Liu, X., Kong, F., Lui, J. C., \& Zuo, J. (2026, March). Online multi-llm selection via contextual bandits under unstructured context evolution. In Proceedings of the AAAI Conference on Artificial Intelligence (Vol. 40, No. 29, pp. 24855-24863). \url{https://doi.org/10.1609/aaai.v40i29.39672}
\bibitem{m2_4} Hong, Z., Zhang, Q., Sun, J., Shang, Z., Kong, M., Wang, X., ... \& Dai, Z. (2026). MASPOB: Bandit-Based Prompt Optimization for Multi-Agent Systems with Graph Neural Networks. arXiv. \url{https://arxiv.org/abs/2603.02630}
\bibitem{m2_5} Rietz, F., Smirnov, O., Karimi, S., \& Cao, L. (2026). Prompt Tuning Decision Transformers with Structured and Scalable Bandits. Advances in Neural Information Processing Systems, 38, 58258-58286. \url{https://www.microsoft.com/en-us/research/publication/prompt-tuning-decision-transformers-with-structured-and-scalable-bandits/}
\bibitem{m3_1} Sloan, M., \& Wang, J. (2015, September). Dynamic information retrieval: Theoretical framework and application. In Proceedings of the 2015 International Conference on the theory of Information Retrieval (pp. 61-70). \url{https://doi.org/10.1145/2808194.2809457}
\bibitem{m3_3} Yang, A., \& Yang, G. H. (2017, October). A contextual bandit approach to dynamic search. In Proceedings of the ACM SIGIR International Conference on Theory of Information Retrieval (pp. 301-304). \url{https://doi.org/10.1145/3121050.3121101}
\bibitem{m3_4} Zhang, W., Zhu, Y., Lu, Y., Demarne, M., Wang, W., Deng, K., ... \& Krishnan, S. (2025, November). Flair: Feedback learning for adaptive information retrieval. In Proceedings of the 34th ACM International Conference on Information and Knowledge Management (pp. 6284-6292). \url{https://doi.org/10.1145/3746252.3761553}
\bibitem{m4_3} Le, H., Wang, Y., Gotmare, A. D., Savarese, S., \& Hoi, S. C. H. (2022). Coderl: Mastering code generation through pretrained models and deep reinforcement learning. Advances in Neural Information Processing Systems, 35, 21314-21328. \url{https://arxiv.org/abs/2207.01780}
\bibitem{m4_5} Yu, Z., Gu, W., Wang, Y., Jiang, X., Zeng, Z., Wang, J., ... \& Zhang, S. (2024). Reasoning through execution: Unifying process and outcome rewards for code generation. arXiv. \url{https://arxiv.org/abs/2412.15118}
\bibitem{m5_4} Han, B., Ren, Z., Wu, Z., Zhou, Y., \& Peng, J. (2022). Off-policy reinforcement learning with delayed rewards. In Proceedings of the 39th International Conference on Machine Learning (PMLR, Vol. 162, pp. 8280--8303). \url{https://proceedings.mlr.press/v162/han22e.html}
\bibitem{m5_6} Li, B., Sun, Z., Huang, T., Zhang, H., Wan, Y., Li, G., ... \& Lyu, C. (2024). Ircoco: Immediate rewards-guided deep reinforcement learning for code completion. Proceedings of the ACM on Software Engineering, 1(FSE), 182-203. \url{https://doi.org/10.1145/3643735}
\bibitem{m6_1} Zhang, D., Chen, L., Zhang, S., Xu, H., Zhao, Z., \& Yu, K. (2023). Large language models are semi-parametric reinforcement learning agents. Advances in Neural Information Processing Systems, 36, 78227-78239. \url{https://arxiv.org/abs/2306.07929}
\bibitem{m6_5} Krishnamoorthy, A., Ivatury, K., \& Ahmadnia, B. (2025, September). Multi-Agent Reinforcement Learning for Interactive Code Debugging with Human Feedback and Memory. In Proceedings of the 15th International Conference on Recent Advances in Natural Language Processing-Natural Language Processing in the Generative AI Era (pp. 595-603). \url{https://doi.org/10.26615/978-954-452-098-4-070}
\bibitem{m7_2} Chilowicz, M., Duris, E., \& Roussel, G. (2009, May). Syntax tree fingerprinting for source code similarity detection. In 2009 IEEE 17th international conference on program comprehension (pp. 243-247). IEEE. \url{https://doi.org/10.1109/ICPC.2009.5090050}
\bibitem{m7_3} Li, Y., Wang, S., \& Nguyen, T. (2021, May). Fault localization with code coverage representation learning. In 2021 IEEE/ACM 43rd International Conference on Software Engineering (ICSE) (pp. 661-673). IEEE. \url{https://doi.org/10.1109/ICSE43902.2021.00067}
\bibitem{m7_4} Verma, A., Udhayanan, P., Shankar, R. M., Kn, N., \& Chakrabarti, S. K. (2021, October). Source-code similarity measurement: syntax tree fingerprinting for automated evaluation. In Proceedings of the First International Conference on AI-ML Systems (pp. 1-7). \url{https://doi.org/10.1145/3486001.3486228}
\bibitem{m8_1} Ellis, K., Morales, L., Sabl\'{e}-Meyer, M., Solar-Lezama, A., \& Tenenbaum, J. (2018). Learning libraries of subroutines for neurally guided Bayesian program induction. In Advances in Neural Information Processing Systems 31 (pp. 7816--7826). \url{https://papers.nips.cc/paper/8006-learning-libraries-of-subroutines-for-neurallyguided-bayesian-program-induction}
\bibitem{m8_4} Xu, P., Wu, G., Chen, X., Yu, T., Xiao, C., Dernoncourt, F., ... \& Swaminathan, V. (2026, March). Skill discovery for software scripting automation via offline simulations with llms. In Findings of the Association for Computational Linguistics: EACL 2026 (pp. 743-759). \url{https://doi.org/10.18653/v1/2026.findings-eacl.37}
\bibitem{m8_6} Stengel-Eskin, E., Prasad, A., \& Bansal, M. (2024). Regal: Refactoring programs to discover generalizable abstractions. arXiv. \url{https://arxiv.org/abs/2401.16467}
\bibitem{m9_3} Rabin, R., Hostetler, J., McGregor, S., Weir, B., \& Judd, N. (2025). Sandboxeval: Towards securing test environment for untrusted code. arXiv. \url{https://arxiv.org/abs/2504.00018}
\bibitem{m9_4} Wang, J., Luo, X., Cao, L., He, H., Huang, H., Xie, J., ... \& Cai, Y. (2024). Is your ai-generated code really safe? evaluating large language models on secure code generation with codeseceval. arXiv. \url{https://arxiv.org/abs/2407.02395}
\bibitem{m10_1} Jiang, H., Chen, Y., Cao, Y., Lee, H. Y., \& Tan, R. T. (2025). Codejudgebench: Benchmarking llm-as-a-judge for coding tasks. arXiv. \url{https://arxiv.org/abs/2507.10535}
\bibitem{m10_2} Zhao, Y., Luo, Z., Tian, Y., Lin, H., Yan, W., Li, A., \& Ma, J. (2024). CodeJudge-Eval: Can large language models be good judges in code understanding? arXiv. \url{https://arxiv.org/abs/2408.10718}
\bibitem{m10_4} Jain, S., Ahmed, U. Z., Sahai, S., \& Leong, B. (2025). Beyond consensus: Mitigating the agreeableness bias in llm judge evaluations. arXiv. \url{https://arxiv.org/abs/2510.11822}
\bibitem{m10_6} Spiess, C., Gros, D., Pai, K. S., Pradel, M., Rabin, M. R. I., Alipour, A., ... \& Ahmed, T. (2025, April). Calibration and correctness of language models for code. In 2025 IEEE/ACM 47th International Conference on Software Engineering (ICSE) (pp. 540-552). IEEE. \url{https://doi.org/10.1109/ICSE55347.2025.00040}
\bibitem{m11_1} Chen, Z., Chen, D., Jin, R., Liang, Y., Xie, Y., \& Sun, H. (2026). Bridging Online and Offline RL: Contextual Bandit Learning for Multi-Turn Code Generation. arXiv. \url{https://arxiv.org/abs/2602.03806}
\end{thebibliography}
\end{document}